\documentclass{article}

\usepackage[preprint]{neurips_2026}
\usepackage{hyperref}
\usepackage{xcolor}
\setcitestyle{open={(},close={)}}
\hypersetup{
    colorlinks=true,
    urlcolor={blue!70!black},
    linkcolor={blue!70!black},
    citecolor={blue!70!black},
}

\usepackage[utf8]{inputenc} 
\usepackage[T1]{fontenc}    
\usepackage{hyperref}       
\usepackage{url}            
\usepackage{booktabs}       
\usepackage{amsfonts}       
\usepackage{nicefrac}       
\usepackage{microtype}      
\usepackage{xcolor}         

\usepackage{url}
\usepackage{booktabs}
\usepackage{amsfonts}
\usepackage{nicefrac}
\usepackage{microtype}
\usepackage{xcolor}
\usepackage{graphicx}
\usepackage{amsmath,amssymb}
\usepackage{amsthm}
\usepackage{multirow}
\usepackage{subcaption}
\usepackage{array}
\usepackage{tabularx}
\usepackage{enumitem}
\usepackage{float}
\usepackage{placeins}
\usepackage{cleveref}
\usepackage[most]{tcolorbox}
\usetikzlibrary{arrows.meta,calc,positioning}

\newtheorem{proposition}{Proposition}

\graphicspath{{figures_NL/}{figures_python/}{figures_html/}{figures_latex/}{figures/}}
\hypersetup{hidelinks}
\newcommand{\tok}[1]{\texttt{#1}}

\newif\ifshowcomments
\showcommentstrue
\ifshowcomments
  \newcommand{\draftcomment}[3]{{\textcolor{#3}{[#2: \emph{#1}]}}}
\else
  \newcommand{\draftcomment}[3]{}
\fi

\newtcolorbox{observationbox}[1][]{
  breakable,
  enhanced,
  colback=blue!2,
  colframe=blue!55!black,
  boxrule=0.6pt,
  arc=2pt,
  left=6pt,
  right=6pt,
  top=5pt,
  bottom=5pt,
  title=Observation,
  fonttitle=\bfseries,
  #1
}

\newtcolorbox{takeawaybox}[1][]{
  breakable,
  enhanced,
  colback=green!3,
  colframe=green!45!black,
  boxrule=0.6pt,
  arc=2pt,
  left=6pt,
  right=6pt,
  top=5pt,
  bottom=5pt,
  title=Takeaway,
  fonttitle=\bfseries,
  #1
}

\usepackage{graphicx}
\usepackage{enumitem}
\usepackage{tikz}
\usetikzlibrary{
  arrows.meta,
  positioning,
  calc,
  matrix,
  fit,
  backgrounds,
  decorations.pathreplacing
}

\definecolor{YLHybrid}{HTML}{D95AA1}
\definecolor{YLHybridLight}{HTML}{FCE6F2}
\definecolor{YLAttn}{HTML}{287A80}
\definecolor{YLAttnLight}{HTML}{E4F4F5}
\definecolor{YLSoft}{HTML}{F4F5F7}
\definecolor{YLDark}{HTML}{3F4A56}

\definecolor{pm}{HTML}{1AA3D4}
\definecolor{et}{HTML}{A67868}
\definecolor{sc}{HTML}{F57F35}

\usepackage{tabularx}
\usepackage{array}
\usepackage{makecell}
\usepackage[table]{xcolor}
\usepackage{arydshln}

\definecolor{pm}{HTML}{1AA3D4} 
\definecolor{sc}{HTML}{F57F35} 

\title{Comparing Transformers and Hybrid Models at the Token Level}

\author{%
  Yanhong Li \\
  Allen Institute for AI \\
  \texttt{yanhongl@allenai.org} \\
  \And
  William Merrill \\
  Allen Institute for AI \\
  \texttt{willm@allenai.org}
}

\begin{document}

\maketitle

\begin{abstract}
    Hybrid language models that mix attention and recurrent layers have shown promise: theoretically, recurrent layers ameliorate the limitations of pure transformers on state tracking, and empirically, hybrids can outperform pure transformers in loss and downstream evaluations \citep{waleffe2024empirical,merrill2026olmohybrid}.
    Yet it remains unclear which data or capabilities drive these gains, and to what degree they reflect the theoretical advantages motivating hybrid models.
    We address this question using the open weights from Olmo 3 \citep{olmo2025olmo3} and Olmo Hybrid \citep{merrill2026olmohybrid}: we compare the loss of a matched transformer and hybrid at the same target tokens under the same prefixes, stratifying the results by natural token tags, copy features, delimiter structure, and controlled synthetic probes.
    The hybrid has lower loss on most tag families, but the gains are not uniform: they are largest for open-class content words and smaller for many closed-class function words.
    Across prose, code, and markup, the hybrid's loss advantage is larger on opening delimiters than on the corresponding closing delimiters, and nearly vanishes on repeated $n$-grams.
    Synthetic probes show the same split: the hybrid is favored on pronoun-memory and entity-tracking tasks, whereas the transformer is favored on bracket-matching tasks that require choosing closing delimiters.
    These patterns suggest that the recurrent layers in hybrids improve predictions that leverage the semantic state of a document, whereas attention helps on tokens predictable by $n$-gram copying or syntactic bracket matching.
    We conclude with proof-of-concept filtered evaluations showing how token-level decompositions can sharpen pretraining diagnostics for hybrid architectures.
\end{abstract}

\section{Introduction}

Hybrid language models that mix attention with recurrent sequence layers have recently challenged the attention-only transformer as the default architecture for large-scale language modeling. Theoretically, hybrid models are motivated by the complementary expressivity of attention and recurrent layers: attention allows retrieving information from past tokens, which is useful for copying and syntactic tasks like bracket matching, while recurrent layers offer advantages for constructing and updating latent state in ordered state-tracking computations \citep{merrill2024illusion,grazzi2025unlocking}.
Theoretically, hybrid models inherit both of these strengths and, empirically, they can outperform transformers with the same pretraining budget in loss and downstream benchmarks \citep{waleffe2024empirical,merrill2026olmohybrid}.
While these empirical results are impressive, it remains unclear what finegrained predictions are improved by adding hybrid layers to a model, or to what degree these relate to the theoretical advantages of hybrid models.
In this work, we therefore aim to understand the finegrained prediction differences between transformers and hybrid models \emph{at the token level}, asking the following two questions:

\begin{enumerate}[nosep]
    \item \textbf{\emph{What kinds of tokens and prediction contexts do hybrid models predict better than transformers?}} More concretely, which individual token occurrences account for the lower average loss of a hybrid model relative to a matched transformer?
    \item \textbf{\emph{Do those token occurrences match the theoretical advantages of the component architectures?}} More specifically, do hybrid-favored tokens look like the expressivity advantages associated with recurrence, such as ordered state tracking, while transformer-favored tokens look like the advantages associated with attention, such as recall from the visible prefix?
\end{enumerate}

We study these questions by comparing two released 7B-scale models from the same recipe family, Olmo~3~7B and Olmo Hybrid~7B \citep{olmo2025olmo3,merrill2026olmohybrid}. For a target position $i$ with prefix $x_{<i}$ and observed token $x_i$, we compute paired token losses
\[
\ell_i^{\mathrm{Tr}}=-\log p_{\mathrm{Tr}}(x_i\mid x_{<i}),\qquad
\ell_i^{\mathrm{Hyb}}=-\log p_{\mathrm{Hyb}}(x_i\mid x_{<i}),
\]
and the paired gap $\Delta_i=\ell_i^{\mathrm{Tr}}-\ell_i^{\mathrm{Hyb}}$. Thus $\Delta_i>0$ means that the hybrid assigns higher probability, equivalently lower NLL, to the observed next token at the same prefix. This paired quantity lets us move from the average question ``which model has lower loss?'' to the token-level question ``which prediction events produce the gain?''

Our first analysis is observational. We compute $\Delta_i$ over prose, code, and markup, then align target tokens to surface tags: POS tags for prose and source-level categories such as identifiers, strings, comments, text nodes, attributes, commands, brackets, and tags for structured text. We report raw tag-stratified means, which describe the absolute hybrid advantage on the actual corpus positions, and regression robustness checks, which ask whether the same patterns remain after controlling for difficulty, frequency, position, subword status, and local reuse. The regression analysis is complementary to the raw summaries: it asks whether the tag-level patterns persist after controlling for difficulty, frequency, position, subword status, and local reuse.

The natural-token results are non-uniform. The hybrid has lower loss on most token families, but the advantage is largest on open-class content-bearing categories. In prose, content words have a larger raw gap than function words, and the aggregate content--function contrast remains after controls. In structured domains, hybrid-favored categories include identifiers, strings, comments, text nodes, attribute values, and commands. By contrast, the hybrid advantage is smaller on closing delimiters and rigid formatting tokens, and it approaches zero on long repeated $n$-grams.
Thus, while the hybrid wins overall---especially on open-class words---the transformer remains competitive, and sometimes favored, in contexts where attention is the important primitive: retrieving material already present in the visible prefix or satisfying an already-open structural obligation.

Our second analysis uses controlled synthetic probes to separate delayed information from the type of computation required at the target. 
In a pronoun-memory probe, the model sees people with roles and later must choose the pronoun for the person filling a queried role. In an entity-tracking probe, it sees entities bound to attributes and later must choose which entity has a queried attribute. In a structural-closure probe, it sees an opened bracket or tag region and later must predict the required closing token. The first two probes require readout of a maintained role or entity--attribute binding and favor the hybrid. The closure probe also involves delayed information, but the answer is already determined by an opener visible in the prefix, and it favors the transformer. 
Thus the relevant distinction is not simply short versus long dependency, but whether the target is primarily a state-conditioned choice, a visible-prefix copy, or a closure decision.

These results point to two different follow-on uses of the token-level analysis. The first is for designing controlled state-tracking benchmarks. Existing state-tracking tasks often ask a model to update a small, fixed set of symbols according to a sequence of instructions and then report the final symbol or value. Such closed-world tasks are useful because they isolate ordered update and readout, but once architectures solve the fixed-slot setting, they become less informative for distinguishing future recurrent layers. Our results suggest a more demanding, open-world extension: discourse state tracking. In these tasks, the input can introduce new people, objects, variables, or document regions over time, change their attributes or relations, and later require a context-dependent prediction, such as which entity, value, or content word should come next. This framing keeps the connection to state tracking while moving from fixed symbolic slots to the growing, relational state needed in natural text, code, and markup. Controlled discourse-state-tracking tasks would therefore provide a more demanding testbed for developing linear-RNN and hybrid sequence layers.

The second use is for comparing architectures during pretraining experiments. In 1B-scale development runs comparing a Transformer, a Hybrid, and a Pure RNN, aggregate validation loss compresses distinct regimes into a single number. Filtered token losses could be a useful way to compare and contrast architectures in this setting: losses on hybrid-favored non-copy tokens separate architectures more sharply, while a copy-only filter exposes the complementary regime in which attention-based models outperform the Pure RNN on visible-prefix reuse. Because these filters are computed from the same per-token NLL as standard validation, they add little overhead while providing a more capability-resolved view of training progress. Reporting filtered token losses alongside aggregate validation loss can therefore show not only whether a design improves overall perplexity, but which predictive capabilities it improves or sacrifices.

\section{Expressivity Background and Empirical Hypotheses}
\label{sec:architecture-state}

Expressivity theory gives a useful way to separate the computations that can be hidden inside next-token prediction. We organize the background by architectural primitive: attention, recurrence, and their combination in hybrid models.

\textbf{Transformer expressivity: copy/recall and structural matching.}
Attention gives transformers a direct mechanism for selecting positions in the visible prefix. Under standard formalizations, this lets transformers express recall-style computations such as $n$-gram retrieval, and it also supports many bracket-matching or structural-matching problems by making the relevant earlier position or opener accessible \citep{weiss2021thinkingliketransformers,yao2023selfattentionnetworksprocessbounded}. Thus attention should be especially useful when the next token can be recovered by reusing already-visible material or by matching an explicit structural opener:
\begin{enumerate}[label=(\arabic*),ref=\arabic*,leftmargin=2.2em]
    \item \label{ex:copying}
    \textit{John works for the National Hamburger Association of America. The National Hamburger Association of \underline{America} $\ldots$}

    \item \label{ex:parens}
    \texttt{( [ ] [ \{ ( ) \} ] \underline{)}}.
\end{enumerate}
The first example asks for copy/recall from the visible prefix, while the second asks for structural matching against a visible opener. In contrast, copy/recall is a known limitation of pure recurrent models with bounded state: copying or recalling arbitrary prefix information can require storing more information than a fixed-size recurrent state can retain \citep{arora2024zoology,jelassi2024repeat,merrill2026olmohybrid}.

\textbf{RNN expressivity: ordered state tracking.}
The complementary limitation of fixed-depth transformers is ordered state composition. Under standard fixed-depth and log-precision assumptions, transformer next-token predictors are contained in low-depth threshold-circuit classes such as $\mathsf{TC}^0$; consequently, they cannot express general $\mathsf{NC}^1$-complete ordered state-composition problems unless $\mathsf{TC}^0=\mathsf{NC}^1$ \citep{merrill-sabharwal-2023-parallelism,chiang2025transformers}. In contrast, modern linear RNNs with sufficiently expressive transition matrices, including DeltaNet/GDN variants with negative eigenvalues, can represent such state-tracking computations \citep{merrill2024illusion,grazzi2025unlocking,merrill2026olmohybrid}. These results motivate the complementary hypothesis that predictions requiring ordered state updates should favor models with recurrent layers over pure transformers. A simple program-state example is:
\begin{enumerate}[label=(\arabic*),ref=\arabic*,leftmargin=2.2em,start=3]
    \item \label{ex:program-state}
    \texttt{a,b,c = 1,2,3; a,c = c,a; assert a == }\underline{3}.
\end{enumerate}

\textbf{Hybrid expressivity: composing recall with state.}
Hybrid models combine attention layers, which support recall from the visible prefix, with recurrent layers, which support ordered state updates. This combination should allow them to handle contexts like \Cref{ex:copying,ex:parens,ex:program-state}, where a pure RNN or a pure transformer is missing one of the relevant primitives. Moreover, hybrids are not merely the union of two independent capabilities: \citet{merrill2026olmohybrid} show that a GDN--attention hybrid can solve \emph{state-based recall}, where a model must track updates to a pointer and then use the resulting pointer to retrieve a value from the prefix, while neither a pure transformer nor a pure GDN model can express the full problem under the standard assumptions. A code-like example is:
\begin{enumerate}[label=(\arabic*),ref=\arabic*,leftmargin=2.2em,start=4]
    \item \label{ex:state-based-recall}
    \texttt{bits = [0,1,0,0,$\ldots$]; a,b,c = 3,1,2; a,c = c,a; assert bits[a] == }\underline{0}.
\end{enumerate}

This expressivity picture gives the empirical hypotheses for the rest of the paper. Next-token prediction is not a single homogeneous computation: we treat each prefix as inducing a latent discourse/program state, and we view each target token as a query against, or an update to, some aspect of that state. Some targets are recoverable from the visible prefix; others require ordered state construction and state-conditioned readout; still others compose both. We therefore expect attention-heavy transformers to be competitive on visible-prefix reuse and structural closure, recurrent layers to help on state-conditioned predictions, and hybrids to be strongest when natural language, code, or markup requires both. The remainder of the paper tests where these regimes appear in ordinary next-token prediction across natural language, code, and markup.

\section{Empirical Methodology}
\label{sec:empirical_method}

We ask which \emph{individual next-token predictions} account for the loss gap between a hybrid model and a matched transformer. The basic unit of analysis is one target position.  For a packed token sequence $x_{1:L}$ and target position
$i$, both models are evaluated on the same prefix $x_{<i}$ and the same observed target token $x_i$.  Let
\[
\ell_i^{\mathrm{Tr}}
    = -\log p_{\mathrm{Tr}}(x_i \mid x_{<i}),
    \qquad
\ell_i^{\mathrm{Hyb}}
    = -\log p_{\mathrm{Hyb}}(x_i \mid x_{<i}),
\]
where $\mathrm{Tr}$ denotes the transformer and $\mathrm{Hyb}$ denotes the hybrid model.  We define the paired token-level loss gap\footnote{
Because $\Delta_i$ is a log-probability difference, a mean gap of
$\bar{\Delta}$ nats corresponds to a geometric-mean probability ratio of
$\exp(\bar{\Delta})$ in favor of the hybrid.  For example, $0.04$ nats is about a
$4.1\%$ probability ratio.  It is not a log-odds ratio.
}
\[
\Delta_i
    = \ell_i^{\mathrm{Tr}} - \ell_i^{\mathrm{Hyb}}
    = \log p_{\mathrm{Hyb}}(x_i \mid x_{<i})
      - \log p_{\mathrm{Tr}}(x_i \mid x_{<i}).
\]
Thus $\Delta_i>0$ means that, at position $i$, the hybrid assigns higher probability to the observed next token than the transformer.  All natural-token analyses in this section aggregate these per-position quantities.

\textbf{Models, domains and evaluation.}
We compare two released 7B-scale models from the same recipe family: \textbf{Olmo~3~7B} and \textbf{Olmo Hybrid~7B} \citep{olmo2025olmo3,merrill2026olmohybrid}.
This pair is closely matched in tokenizer, data mixture, and training recipe, making the per-token gap primarily reflect the architectural difference (the sequence mixer).
We evaluate on prose and structured text spanning natural language, code, and markup (prose: PG-19, News, Wikipedia, essay, textbooks and scientific papers; structured: Python, HTML,
\LaTeX{}).
Text is packed into length-$L{=}8192$ sequences and we compute next-token NLL at every position.
Unless otherwise stated, all 7B token-level and synthetic analyses use the final
released checkpoint pair.
Additional preprocessing and checkpoint details are in Appendix~\ref{app:method_details}.

\textbf{Token tagging and alignment.}
After computing $\Delta_i$ for every LM target position, we assign surface tags using the same two-step procedure across domains.  First, we tag spans in the original source text.  Second, we align those source-level tags to LM target tokens by character-span overlap between the decoded LM token and the tagged source span.  Prose and structured domains differ in the source-level tagger and tag inventory, and share the alignment procedure. For prose, the source spans are words tagged with the Brown POS tagset. For Python, HTML, and \LaTeX{}, the source spans come from lightweight tokenizers or parsers, with categories such as identifiers, strings, comments, delimiters, tags, attributes, commands, and text nodes.  When an LM token overlaps multiple source-level tags, we use multi-tag attribution, so the same $\Delta_i$ contributes to each overlapping tag's summary. Full tag taxonomies and alignment rules are in Appendix~\ref{app:tagging_details}
and Appendices~\ref{sec:python_analysis}--\ref{sec:html_analysis}. We also record each LM token's word-position type: whole word, prefix, middle subword, or suffix.

\textbf{Analysis I: Tag-stratified raw summaries.}
Raw here means unadjusted: we average the paired loss gap over the observed corpus positions assigned to a tag, without reweighting those positions or controlling for other variables.  This answers the descriptive question: among the actual corpus positions with a given tag, how much lower is the hybrid's NLL than the transformer's?  For a tag or position set $\tau$, let $\mathcal{I}_{\tau}$ be the set of target positions assigned that tag.  We report
\[
\widehat{\Delta}(\tau)
    =
    \frac{1}{|\mathcal{I}_{\tau}|}
    \sum_{i\in \mathcal{I}_{\tau}} \Delta_i
    \;=\;
    \mathbb{E}[\Delta_i \mid i\in \mathcal{I}_{\tau}].
\]
For example, $\tau$ can be \textsc{Noun}, \textsc{Open Bracket}, or \textsc{HTML open tag}.  If an LM token overlaps multiple source-level tags, that position belongs to each corresponding $\mathcal{I}_{\tau}$, so its token-level gap contributes to each relevant raw summary.

\textbf{Analysis II: Regression robustness check.}
Raw tag means provide the main descriptive summary: they show the hybrid--transformer gap on the actual positions belonging to each tag.
However, a raw tag mean can be confounded by other properties of those positions.  Tags are correlated with difficulty, frequency, position, subword status, and local reuse, and these covariates can affect $\Delta_i$ independently of tag membership.
For example, content and function categories differ substantially in token frequency.
We therefore fit linear regressions that control for these correlates:

\begin{equation}
\small
\label{eq:prose_reg}
\vspace{-1pt}
\begin{array}{@{}l@{\hspace{1em}}l@{}}
\begin{aligned}
\Delta_i \;\sim\;& \textsc{domain}_i + \textsc{tag}_i + \textsc{wpos}_i \\ &+ \textsc{relpos}_i + \bar\ell_i + \bar\ell_i^{2} \\ &+ \sum_{k\in\{1,2,3,4\}} \textsc{copy}_{k,i} \\
&+ \log \textsc{prevdist}_i + \log \textsc{freq}(x_i).
\end{aligned}
&
{\scriptsize
\setlength{\tabcolsep}{1pt}
\renewcommand{\arraystretch}{0.8}
\begin{tabular}{@{}l p{28em}@{}}
$\Delta_i$        & paired NLL gap $\ell^{\mathrm{Tr}}_i-\ell^{\mathrm{Hyb}}_i$ ($>0$ = hybrid better) \\
\textsc{domain}   & prose source fixed effects \\
\textsc{tag}      & either coarse POS family or aggregate word class, depending on the model \\
\textsc{wpos}     & subword position in word (whole/prefix/middle/suffix) \\
\textsc{relpos}   & relative position in packed sequence \\
$\bar\ell_i$      & mean NLL; difficulty proxy ($\bar\ell_i^{2}$ allows curvature) \\
\textsc{copy}$_k$  & prefix-reuse features for $k=1,2,3,4$; $\textsc{copy}_{1,i}$ is same-token reuse (the target token type appeared earlier in the prefix), and $\textsc{copy}_{k,i}$ for $k\ge2$ indicates that the target completes a repeated $k$-gram \\
\textsc{prevdist} & distance to previous occurrence of the same token type (log-scaled) \\
\textsc{freq}     & empirical target-token type frequency in the pooled regression sample \\
\end{tabular}}
\vspace{-1pt}
\end{array}
\end{equation}

We fit two instantiations of Equation~\ref{eq:prose_reg}.
The first is the \textbf{coarse-tag model}, where \textsc{tag} is the full coarse Brown-POS family; this model gives the detailed adjusted effects for nouns, verbs, auxiliaries, brackets, punctuation, and other coarse categories.
The second is the \textbf{aggregate word-class model}, where \textsc{tag} is replaced by the three-way label \textsc{Content}/\textsc{Function}/\textsc{Other};\footnote{
\textsc{Content} includes open-class lexical categories such as nouns, main verbs, adjectives, and adverbs. \textsc{Function} includes closed-class grammatical categories such as determiners, prepositions, conjunctions, pronouns, auxiliaries, modals, wh-categories, particles, and infinitival \textsc{TO}. \textsc{Other} includes punctuation, brackets, symbols, numerals, and remaining tags.} this model gives the aggregate rows in Figure~\ref{fig:tag_raw_vs_effect}. We fit the aggregate model separately.
Both models include the same controls.

The raw summaries and regression effects answer complementary questions: raw gaps show whether the hybrid is better or worse on a token family in absolute terms, while regression effects show whether that family is more or less hybrid-favored than expected after controlling for confounds.

\begin{figure}[H]
\centering
\begin{tikzpicture}[font=\sffamily]
\definecolor{pm}{HTML}{1AA3D4}
\definecolor{et}{HTML}{A67868}
\definecolor{sc}{HTML}{F57F35}
\definecolor{boxgray}{HTML}{D8DDE5}
\newlength{\cardW}       \setlength{\cardW}{0.485\linewidth}
\newlength{\cardWfull}   \setlength{\cardWfull}{0.98\linewidth}
\newlength{\cardHtop}    \setlength{\cardHtop}{1.7cm}
\newlength{\cardHbot}    \setlength{\cardHbot}{1.2cm}
\newlength{\rowGap}      \setlength{\rowGap}{0.08cm}
\newlength{\colGap}      \setlength{\colGap}{0.16cm}
\newlength{\barW}        \setlength{\barW}{0.1cm}
\newlength{\padX}        \setlength{\padX}{0.12cm}
\newlength{\labelWtop}   \setlength{\labelWtop}{1.6cm}
\newlength{\labelWbot}   \setlength{\labelWbot}{2.1cm}
\newlength{\ansWtop}     \setlength{\ansWtop}{1.3cm}
\newlength{\ansWbot}     \setlength{\ansWbot}{1.55cm}
\newlength{\ansH}        \setlength{\ansH}{0.45cm}
\newlength{\promptOffTop}\setlength{\promptOffTop}{1.85cm}
\newlength{\promptOffBot}\setlength{\promptOffBot}{2.4cm}
\tikzset{
  card/.style={
    draw=boxgray, rounded corners=1.5mm, line width=0.5pt,
    inner sep=0pt, outer sep=0pt, fill=white
  },
  ansbox/.style={
    rounded corners=1mm, line width=0.7pt, minimum height=\ansH,
    inner sep=1pt, align=center, font=\bfseries\footnotesize
  }
}
\node[card, anchor=north west, minimum width=\cardW, minimum height=\cardHtop] (c1) at (0,0) {};
\node[card, anchor=north west, minimum width=\cardW, minimum height=\cardHtop]
  at ([xshift=\colGap]c1.north east) (c3) {};
\node[card, anchor=north west, minimum width=\cardWfull, minimum height=\cardHbot]
  at ([yshift=-\rowGap]c1.south west) (c2) {};
\fill[pm] (c1.north west) rectangle ([xshift=\barW]c1.south west);
\fill[sc] (c3.north west) rectangle ([xshift=\barW]c3.south west);
\fill[et] (c2.north west) rectangle ([xshift=\barW]c2.south west);
\node[anchor=west, text=pm, font=\bfseries\small, align=left, text width=\labelWtop]
  at ([xshift=\barW+\padX]c1.west) {Pronoun\\memory};
\node[anchor=west, text=sc, font=\bfseries\small, align=left, text width=\labelWtop]
  at ([xshift=\barW+\padX]c3.west) {Structural\\closure};
\node[anchor=west, text=et, font=\bfseries\small, align=left, text width=\labelWbot]
  at ([xshift=\barW+\padX]c2.west) {Entity\\tracking};
\node[anchor=north west, font=\ttfamily\scriptsize, align=left,
      text width=\dimexpr\cardW-\promptOffTop-\ansWtop-0.3cm\relax]
  at ([xshift=\promptOffTop, yshift=-0.1cm]c1.north west)
  {Liam is the violinist.\\ Naomi is the pilot.\\
   \dots\ $d$ filler tokens \dots\\ the violinist reviewed the report, and};
\node[anchor=north west, font=\ttfamily\scriptsize, align=left,
      text width=\dimexpr\cardW-\promptOffTop-\ansWtop-0.3cm\relax]
  at ([xshift=\promptOffTop, yshift=-0.1cm]c3.north west)
  {\texttt{<header>}\\ \qquad counter += 1\\ \qquad \dots\ $d$ filler tokens \dots};
\node[anchor=north west, font=\ttfamily\scriptsize, align=left, text width=8.5cm]
  at ([xshift=\promptOffBot, yshift=-0.1cm]c2.north west)
  {Julia carried the orange notebook. Sofia carried the green folder.\\
   \dots\ $d$ filler tokens \dots\ Q: Who carried the green folder?\\
   (A) Sofia \quad (B) Julia \quad Answer:};
\node[ansbox, draw=pm, text=pm, anchor=north east, minimum width=\ansWtop]
  at ([xshift=-0.12cm, yshift=-0.12cm]c1.north east) {he};
\node[ansbox, draw=sc, text=sc, anchor=north east, minimum width=\ansWtop]
  at ([xshift=-0.12cm, yshift=-0.12cm]c3.north east) {\texttt{</header>}};
\node[ansbox, draw=et, text=et, anchor=north east, minimum width=\ansWbot]
  at ([xshift=-0.12cm, yshift=-0.12cm]c2.north east) {Sofia};
\node[anchor=north east, text=black!60, font=\itshape\scriptsize]
  at ([xshift=-0.12cm, yshift=-0.65cm]c1.north east) {vs.\ she};
\node[anchor=north east, text=black!60, font=\itshape\scriptsize]
  at ([xshift=-0.12cm, yshift=-0.65cm]c3.north east) {NLL on closer};
\node[anchor=north east, text=black!60, font=\itshape\scriptsize]
  at ([xshift=-0.12cm, yshift=-0.65cm]c2.north east) {vs.\ Julia};
\end{tikzpicture}
\caption{\textbf{Synthetic probe examples.} We vary the distance $d$ between an antecedent/opener and a scored target token.}
\vspace{-5pt}
\label{fig:synthetic_examples}
\end{figure}

\textbf{Analysis III: Controlled synthetic probes.}
The natural-token analyses are observational: they reveal where the gap appears in real text, but a surface tag does not directly specify the decision the model must make at the scored position.  We therefore add three controlled synthetic probe families (Figure~\ref{fig:synthetic_examples}):
\begin{itemize}[nosep,leftmargin=1.4em]
    \item \textsc{Pronoun memory}. The prompt introduces two people with different genders and roles, inserts filler, then refers to one role and scores the matching pronoun, e.g., \emph{he} versus \emph{she}.
    \item \textsc{Entity tracking}. The prompt binds two entities to different attributes, inserts filler, then asks which entity has a queried attribute.
    \item \textsc{Structural closure}. The prompt opens a bracketed or tagged region, inserts filler, then scores the required closing token.
\end{itemize}
For each family, we keep the surface template and filler distribution fixed while varying the distance $d\in\{32,64,128,256,512,1024\}$ between the relevant earlier cue and the scored target token.

For the first two probes, we score them contrastively using accuracy $\mathbf{1}[m>0]$ and margin
\[
m = \log p(y^+ \mid c) - \log p(y^- \mid c),
\]
where $y^+$ is the correct token, $y^-$ is a matched distractor, and $c$ is the shared prefix.  \textsc{Structural closure} instead tests visible structural closure, so we report NLL on the closing token,
\[
\ell_{\mathrm{close}}=-\log p(y_{\mathrm{close}}\mid c).
\]

\section{Empirical Results}
\label{sec:empirical_results}

We organize the results into two parts.  Section~\ref{sec:natural_token_results} analyzes natural tokens from prose, code, and markup; these analyses show where the hybrid--transformer gap appears in real data and distinguish absolute raw gaps from controlled regression effects.  Section~\ref{sec:synthetic_results} uses controlled synthetic probes to test whether the same split appears when we directly manipulate the target type and antecedent distance.

\subsection{Natural-token analysis}
\label{sec:natural_token_results}

\begin{figure}[t]
\centering
\includegraphics[width=\linewidth]{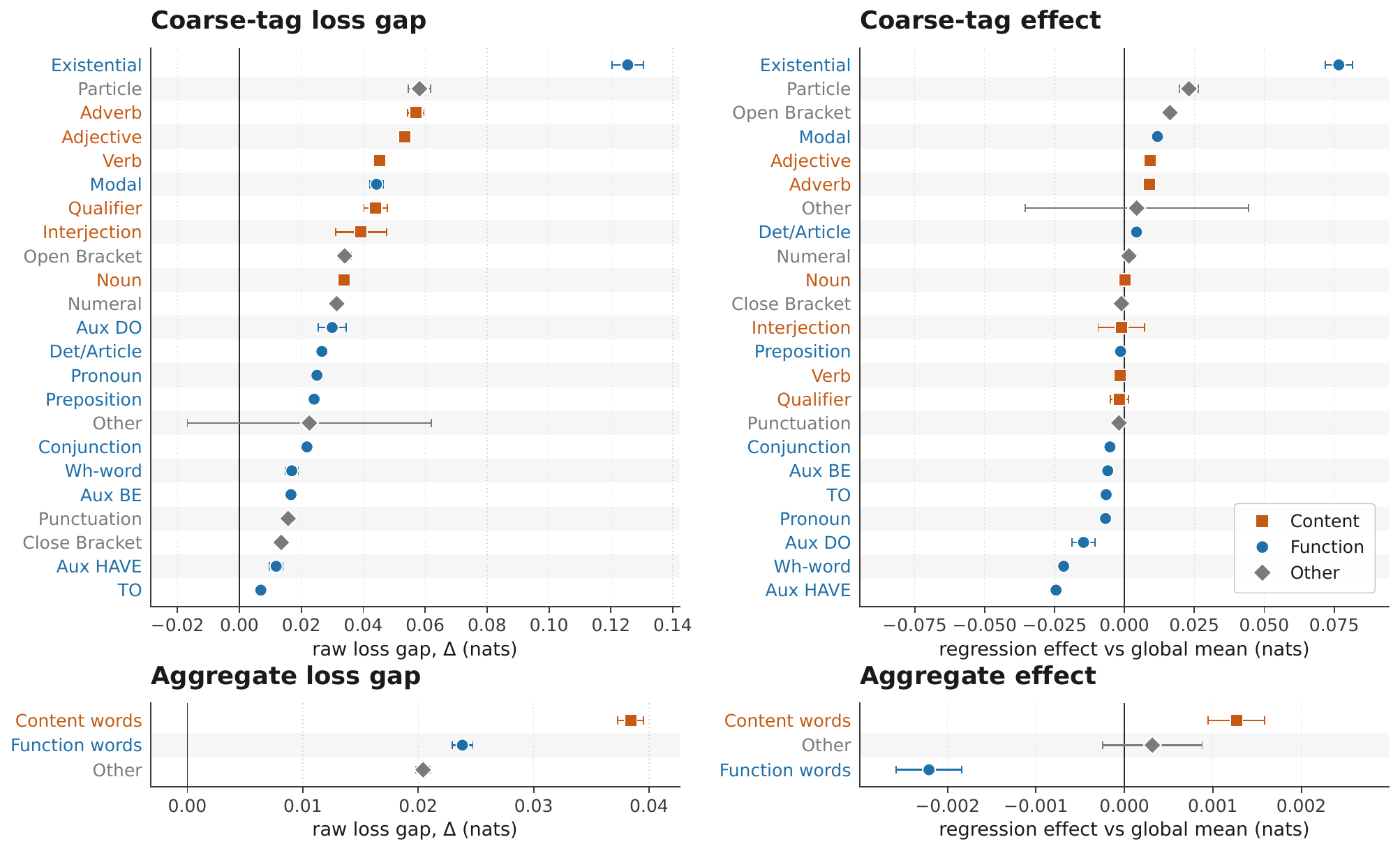}
\caption{\textbf{Raw and adjusted tag effects in prose.}
Left panels show raw paired loss gaps, where positive values mean the hybrid has lower NLL on that token family.
Right panels show regression-adjusted effects relative to the global mean from the regressions.
Top panels use the full coarse POS taxonomy; bottom panels use the separately fitted three-way aggregate model (content/function/other).}
\label{fig:tag_raw_vs_effect}
\end{figure}

\begin{figure}[t]
\centering
\includegraphics[width=\linewidth]{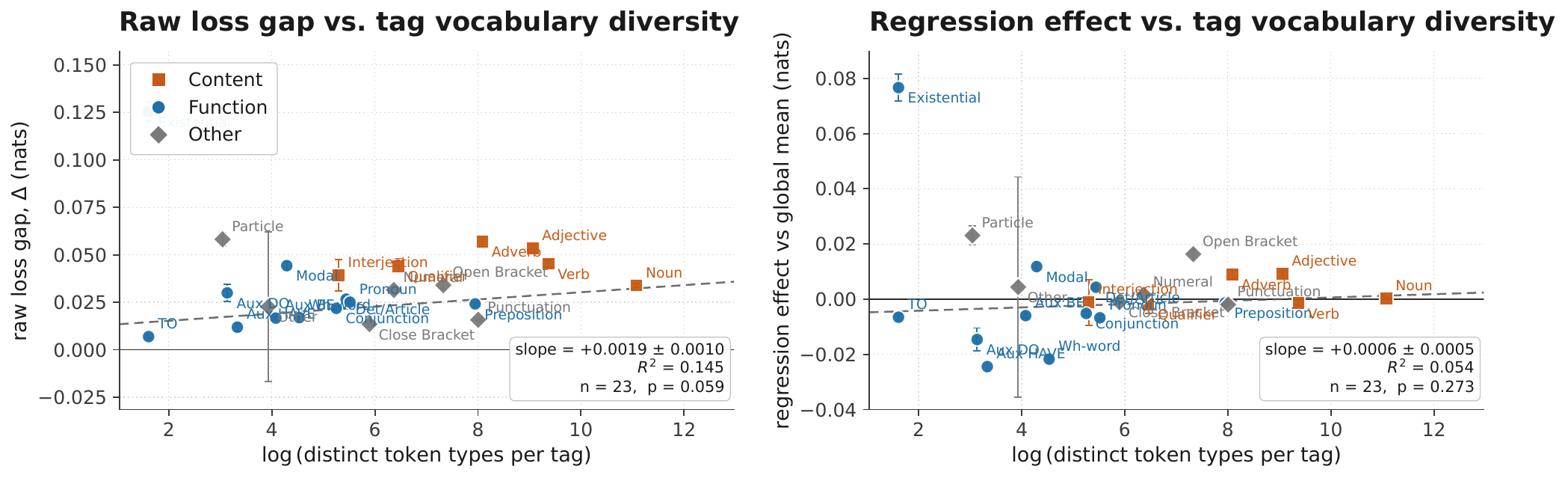}
\caption{\textbf{Tag vocabulary size is associated with the prose tag effects.}
Each point is a coarse prose tag from Figure~\ref{fig:tag_raw_vs_effect}, marked by aggregate word class. The x-axis is $\log|\mathcal{V}_\tau|$, where $\mathcal{V}_\tau$ is the set of distinct target-token types observed with tag $\tau$. Left: raw paired loss gap $\bar\Delta(\tau)$. Right: regression-adjusted tag effect relative to the global mean.}
\vspace{-10pt}
\label{fig:tag_vocab_scatter}
\end{figure}

\textbf{Hybrid advantage across token tags is largest for open-class content words.} \Cref{fig:tag_raw_vs_effect} shows that the hybrid is better on most prose tag families, but not uniformly so.
In the aggregate raw panel, content words have the largest hybrid advantage ($0.0384$ nats) versus function words ($0.0238$ nats), a difference of $0.0146$ nats (about $61\%$ larger).
This ordering persists after controls (difficulty, token/subword position, sequence position, local reuse, previous-token distance, and token frequency): content stays above the global mean, while function falls below it. 
The regression gives a stricter test of the raw trend: after matching positions by difficulty, token frequency, subword status, sequence position, and local reuse, do the same tag-level observations still point in the same direction? For the main pattern, they do. The aggregate content--function contrast survives the controls, and many open-class categories remain more hybrid-favored than closed-class or highly constrained categories. The effects for individual function tags are more heterogeneous, but several function-like categories (e.g., auxiliaries, wh-words, \textsc{TO}, pronouns) still appear among the least hybrid-favored.

The aggregate contrast is also closely related to the open-/closed-class distinction: function words tend to come from small, closed inventories, whereas content words tend to be open class. We therefore ask whether the tag-level effects vary continuously with tag vocabulary size. \Cref{fig:tag_vocab_scatter} plots $\log|\mathcal{V}_\tau|$, the number of distinct target token types realized under tag $\tau$, against both the raw and adjusted tag effects.
Open-class tags (large vocabularies) tend to occupy the more hybrid-favored region in both the raw and adjusted views. After controls the slope is modest, which is expected because frequency, difficulty, and reuse explain part of the same variation; nevertheless, its positive direction agrees with the aggregate content--function result.
The weaker adjusted slope should not be read as eliminating the closed-class pattern: several closed-class/function-like tags still have strongly negative adjusted effects, even though the overall vocabulary-size relationship is less steep after controls.
A notable exception is \textsc{Existential}, which remains unusually hybrid-favored despite a small choice set (also under the controlled view). A plausible explanation is that existential \emph{there} often signals a discourse-introducing construction (``there is/are $\ldots$''), so the decision to use it can depend on latent state updates even though the surface choice set is small.

\begin{figure}[t]
\centering
\includegraphics[width=\linewidth]{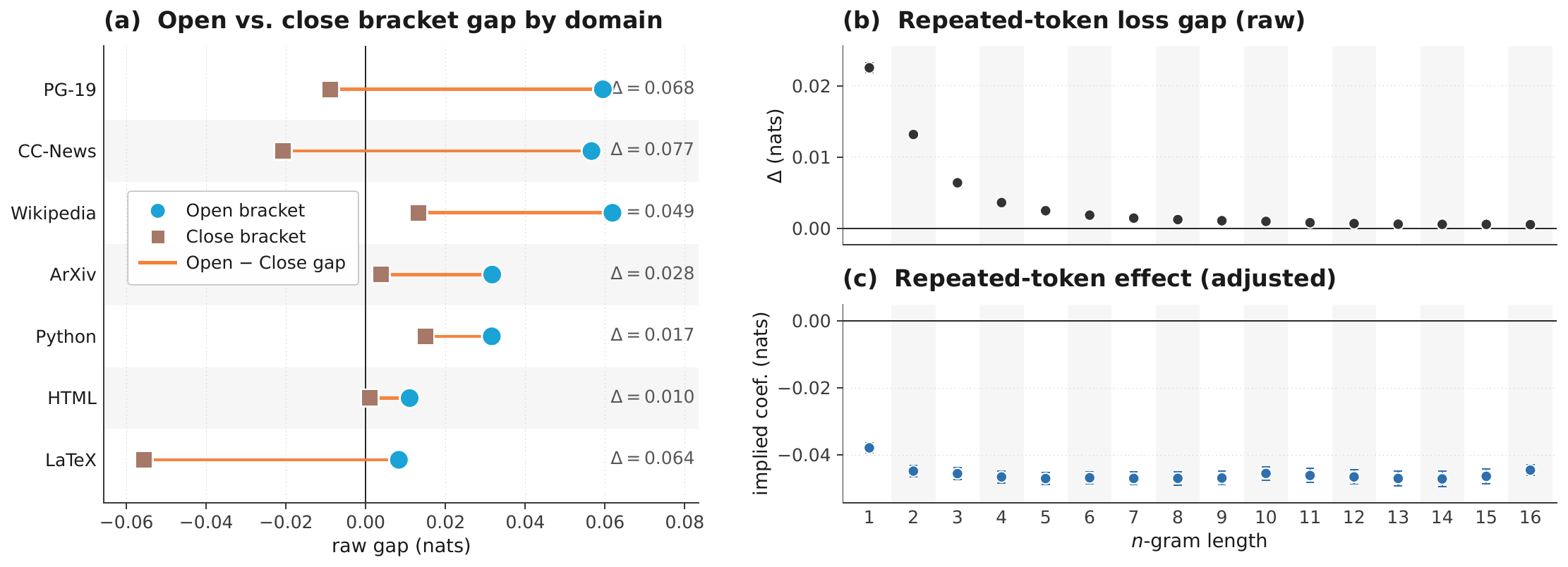}
\caption{\textbf{Brackets and repeated $n$-grams.}
\textbf{A:} Open and close bracket raw gaps across domains; line segments show the open--close difference.
\textbf{B:} Raw paired loss gaps for repeated-token events from repeated 1-grams through 16-grams. Positive raw gaps mean lower hybrid NLL.
\textbf{C:} Implied repetition effects from an extension of Equation~\ref{eq:prose_reg} with \textsc{copy}$_1$--\textsc{copy}$_{16}$. Negative adjusted effects mean that repetition reduces the hybrid advantage relative to comparable non-copy positions.}
\vspace{-8pt}
\label{fig:bracket_copy_gap}
\end{figure}

\textbf{Opening vs.\ closing brackets: the hybrid advantage is reduced on structural closure. } If the bracket pattern is about the predictive role of the target rather than the bracket character itself, then openers and closers should behave differently: openers begin a new region or scope, whereas closers satisfy an obligation already established by the visible prefix. \Cref{fig:bracket_copy_gap}A tests this prediction across domains.
Across all seven domains (prose, Python, HTML, \LaTeX{}), \emph{opening} brackets are consistently more hybrid-favored than the corresponding \emph{closing} brackets; the gap is largest in prose and remains visible in structured text. This matters because opening and closing delimiters have similar surface form but different predictive roles.
Openers often initiate a new region/scope (a state update), whereas closers typically satisfy an already-established structural obligation (closure).
The cross-domain consistency therefore supports an interpretation in terms of what computation is queried at the target, not the bracket character per se.

\textbf{Hybrid advantage nearly disappears on repeated $n$-grams.} \Cref{fig:bracket_copy_gap}B--C isolates repeated spans, an observable proxy for prefix-based copying in a weak sense rather than direct evidence of a copying mechanism. We call a target a repeated $n$-gram event when the contiguous $n$-token sequence ending at the target has appeared earlier in the prefix. For example, in ``\texttt{a b c d a b}\,\underline{\texttt{c}}'', the underlined target completes a repeated 3-gram, ``\texttt{a b c}''.
\Cref{fig:bracket_copy_gap}B reports raw paired loss gaps on such targets for $n=1,\ldots,16$.
A repeated 1-gram only means that the target token type appeared earlier in the prefix, so it captures ordinary token reuse rather than a clean repeated-span continuation.
As $n$ grows, the event becomes a cleaner repetition proxy: if a longer span appears twice, the target is increasingly determined by continuing material already visible in the prefix.
Consistent with this interpretation, the raw hybrid advantage shrinks rapidly with $n$ and approaches zero for long repeated spans.
In these settings, the repeated continuation in the visible prefix provides a strong prediction, so the hybrid no longer has a measurable absolute advantage.

The regression view in \Cref{fig:bracket_copy_gap}C makes the same point under controls.
We extend Equation~\ref{eq:prose_reg} by replacing \textsc{copy}$_1$--\textsc{copy}$_4$ with \textsc{copy}$_1$--\textsc{copy}$_{16}$.
Because repeated-$n$ events nest (a repeated $n$-gram also activates all shorter copy indicators), we plot the \emph{implied} repetition effect at length $n$ by summing active copy coefficients through $n$.
These implied effects are consistently transformer-shifting (negative), with uncertainty growing for large $n$ because long repeated spans are rare and the nested indicators are highly correlated.
Non-repetition control coefficients are reported in Appendix~\ref{app:control_coefficients}.

\subsection{Controlled synthetic analysis}
\label{sec:synthetic_results}

\begin{figure}[t]
\centering
\includegraphics[width=\linewidth]{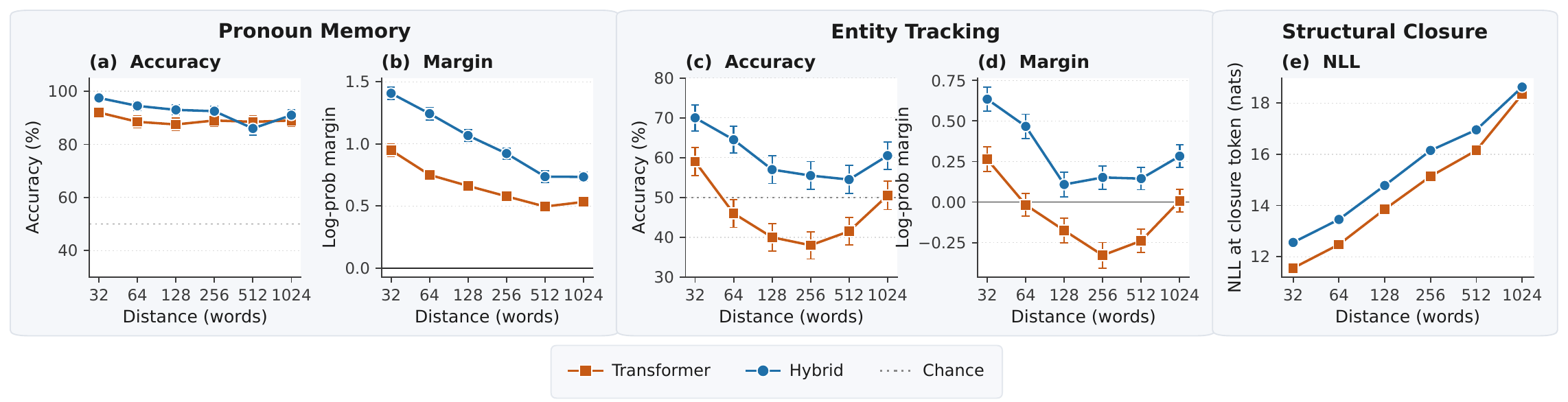}
\caption{\textbf{Controlled probes at the final matched checkpoint.}
Pronoun memory and entity tracking are scored by contrastive accuracy and log-probability margin; structural closure is scored by NLL on the closing token.
Higher is better for accuracy and margin; lower is better for NLL.}
\vspace{-10pt}
\label{fig:distance_row}
\end{figure}

\Cref{fig:distance_row} provides the controlled counterpart to the natural-token split by fixing templates and varying the dependency distance and target type.
Pronoun memory favors the hybrid (larger contrastive margins and typically higher accuracy).
Entity tracking shows a stronger separation: the hybrid stays above chance across distances, while the transformer dips below chance at intermediate distances and yields negative margins there.
Structural closure reverses sign: when the target is a closer whose opener is already present, the transformer attains lower NLL at every distance.
Thus delay alone is not the relevant variable: all three probes require information from earlier context, but only the state-readout probes favor the hybrid.
This is consistent with the theoretical expectation that recurrent layers help most when prediction requires maintaining and querying latent state, whereas attention remains competitive or better when the target is a closure token determined by a visible opener.

\subsection{Summary of empirical findings}

Across observational tagging and controlled probes, we find a consistent, non-uniform token-level structure in the hybrid--transformer gap:
(i) the hybrid advantage is broad but especially pronounced on open-class, content-bearing choices; the effect is weaker and more heterogeneous for closed-class/function categories;
(ii) in contrast, the transformer matches or exceeds the hybrid on structural closure: closing delimiters are less hybrid-favored than openers across domains, and the controlled closure probe favors the transformer; and
(iii) finally, the hybrid advantage collapses on explicit visible-prefix reuse (repeated $n$-grams), and repetition features shift the gap toward the transformer even after controls.

\section{Interpretation: Hybrid Gains and Discourse-State Tracking}
\label{sec:discussion}

We now give a preliminary theoretical account of the empirical split in \S\ref{sec:empirical_results}.  

\paragraph{Setup.}
Fix a target position $t$ and prefix $x_{<t}$.  Let $\tau$ denote the target token's coarse local class, such as a POS tag, delimiter class, or markup/code token type.  For any such class, define
\[
    \mathcal{V}_{\tau}
    =
    \{x\in\mathcal{V}:x\text{ can realize class }\tau\}.
\]
Since our empirical strata are assigned in the analysis, we treat $\tau$ as an oracle local-slot variable.  Thus, when analyzing a fixed class $\tau$, we use class-conditional next-token distributions; equivalently, the domain of $X_t$ is restricted to $\mathcal{V}_{\tau}$.  To keep the notation local to this fixed target and class, write
\[
    p^\star_{\tau}(\cdot)
    =
    p^\star\!\left(
        X_t=\cdot
        \mid
        X_{<t}=x_{<t},\,
        X_t\in\mathcal{V}_{\tau}
    \right)
\]
for the ideal full-prefix next-token distribution under the data-generating process, conditional on the target class.  Let $\phi$ be a feature map of the prefix.  A reduced predictor that can use only the information summarized by $\phi(x_{<t})$, together with the same class restriction, predicts
\[
    p_{\phi,\tau}(\cdot)
    =
    p_\phi\!\left(
        X_t=\cdot
        \mid
        \phi(x_{<t}),\,
        X_t\in\mathcal{V}_{\tau}
    \right).
\]
Going forward, $p_{\phi,\tau}$ denotes the best predictor under that feature restriction and class conditioning.  For an actual architecture, $\phi$ can be viewed as the class of prefix features that the sequence mixer makes easy to express: attention can expose visible-prefix retrieval and structural-matching features, while recurrent layers can expose updated latent-state features.  The reducible loss incurred by using $p_{\phi,\tau}$ to approximate $p^\star_{\tau}$ is
\[
    \mathcal{E}_{\phi,\tau}(t)
    =
    H(p^\star_{\tau},p_{\phi,\tau})-H(p^\star_{\tau})
    =
    D_{\mathrm{KL}}\!\left(p^\star_{\tau}\,\|\,p_{\phi,\tau}\right),
\]
where, for distributions supported on $\mathcal{V}_{\tau}$,
\[
    H(p) = -\sum_{x\in\mathcal{V}_{\tau}} p(x)\log p(x),
    \qquad
    H(p,q) = -\sum_{x\in\mathcal{V}_{\tau}} p(x)\log q(x).
\]
A feature map is predictive enough for a token family when this class-conditional KL divergence is small on that family.

\textbf{First: visible-prefix sufficiency explains why the gap closes on recall and closure.}
The common structure in recall and closure is simple: within the relevant local class $\tau$, suppose a prefix feature $\phi$ lets the reduced predictor determine the class-conditional next-token distribution.  Then, in the deterministic idealization,
\[
    p_{\phi,\tau}=p^\star_{\tau}
    \qquad\text{and hence}\qquad
    D_{\mathrm{KL}}(p^\star_{\tau}\|p_{\phi,\tau})=0.
\]
In less deterministic natural text, the corresponding claim is that this class-conditional KL is small.  Thus, once such a feature is available, a richer state representation has little additional loss to remove.

In the repeated-$n$-gram case, $\phi$ is the visible continuation of the longest repeated suffix: if the current suffix has occurred before and its continuation is unambiguous, then the target is predictable by continuing material already present in the prefix.  In the closure case, $\phi$ is the relevant structural matcher, such as the top of the active delimiter stack: if the target is a closing delimiter, the correct closer is predictable from that stack state.  Since both the matched transformer and the hybrid can express these features in the regimes measured by repeated continuations and bracket/tag closure, we expect little systematic difference between them there.  This explains why repeated $n$-grams and closing delimiters are transformer-friendly or near-neutral.

\textbf{Second: local class size bounds how much any richer feature map can help.}
Let $U_{\tau}$ be the uniform distribution on $\mathcal{V}_{\tau}$, and let $p_{\mathrm{class},\tau}$ be the best predictor that sees only the class restriction $X_t\in\mathcal{V}_{\tau}$ and no prefix features.  The following bound is the formal version of the POS-vocabulary-size effect in \Cref{fig:tag_vocab_scatter}.

\begin{proposition}
    Fix a target class $\tau$ and restrict the domain of $X_t$ to $\mathcal{V}_{\tau}$.  Then any richer feature map $\phi$, evaluated through its best class-conditional reduced predictor, satisfies
    \[
        D_{\mathrm{KL}}(p^\star_{\tau}\|p_{\phi,\tau})
        \le
        \log |\mathcal{V}_{\tau}| .
    \]
\end{proposition}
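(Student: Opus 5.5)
The plan is to compare the best reduced predictor against the uniform distribution $U_{\tau}$, and then evaluate the KL divergence to the uniform distribution explicitly. We read ``best class-conditional reduced predictor'' as the minimizer of the expected cross-entropy $H(p^\star_{\tau},q)$ over predictors $q$ that are measurable with respect to $\phi(x_{<t})$ and supported on $\mathcal{V}_{\tau}$. Under this reading, $p_{\phi,\tau}$ does at least as well as any fixed competitor in that family. The constant predictor $U_{\tau}$ ignores $\phi$ entirely, so it is available to every feature map; the same holds for $p_{\mathrm{class},\tau}$. It follows that
\[
    D_{\mathrm{KL}}(p^\star_{\tau}\|p_{\phi,\tau})
    = H(p^\star_{\tau},p_{\phi,\tau})-H(p^\star_{\tau})
    \le H(p^\star_{\tau},U_{\tau})-H(p^\star_{\tau}).
\]
The same chain also gives $D_{\mathrm{KL}}(p^\star_{\tau}\|p_{\phi,\tau})\le D_{\mathrm{KL}}(p^\star_{\tau}\|p_{\mathrm{class},\tau})$, which is the intended reading of ``richer.''

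Next, I compute the right-hand side directly. Since $U_{\tau}(x)=1/|\mathcal{V}_{\tau}|$ on $\mathcal{V}_{\tau}$, the cross-entropy is $H(p^\star_{\tau},U_{\tau})=\log|\mathcal{V}_{\tau}|$. Substituting gives
\[
    D_{\mathrm{KL}}(p^\star_{\tau}\|p_{\phi,\tau}) \le \log|\mathcal{V}_{\tau}|-H(p^\star_{\tau}).
\]
Entropy is nonnegative on a finite support, so $H(p^\star_{\tau})\ge 0$, and this yields the stated bound $\log|\mathcal{V}_{\tau}|$. The intermediate inequality is worth noting in the paper because it is sharper: it shows that the room for improvement shrinks both when the class is small and when the ideal distribution is already spread out.

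The main obstacle is making the optimality comparison legitimate. The identity relating KL divergence to excess cross-entropy holds pointwise for a fixed prefix. However, ``best predictor given $\phi$'' is naturally defined as an expectation over all prefixes sharing the same feature value, and that predictor need not beat $U_{\tau}$ at every individual prefix. I would handle this in one of two ways. The first is to state the bound in expectation over prefixes: the $\phi$-conditional mixture minimizes the expected cross-entropy, so the expected KL is at most $\log|\mathcal{V}_{\tau}|$. The second is to argue pointwise, which needs a different fact: whenever $p_{\phi,\tau}$ is a mixture of conditionals that includes $p^\star_{\tau}$ with weight $w$, we have $D_{\mathrm{KL}}\le\log(1/w)$.

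Of these, I would adopt the expectation version, since it matches how the tag-level averages are computed in the paper. The pointwise claim would then follow under the paper's idealization that the reduced predictor is optimal at the fixed target, in which case the argument above applies verbatim. I would also add a one-line check that $p_{\phi,\tau}$ has full support on the relevant tokens, since otherwise the KL divergence could be infinite. Optimality rules this out: full support is forced whenever $p^\star_{\tau}$ puts mass on those tokens, because the uniform competitor gives finite cross-entropy.
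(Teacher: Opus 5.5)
This is essentially the paper's proof: optimality lets $p_{\phi,\tau}$ emulate a predictor that ignores $\phi$, and then $D_{\mathrm{KL}}(p^\star_{\tau}\|U_{\tau})=\log|\mathcal{V}_{\tau}|-H(p^\star_{\tau})\le\log|\mathcal{V}_{\tau}|$; the paper passes through $p_{\mathrm{class},\tau}$ before reaching $U_{\tau}$, while you compare with $U_{\tau}$ directly. Your pointwise-versus-expectation caveat is correct and more careful than the paper, whose proof tacitly assumes optimality at the fixed prefix: under the Bayes-conditional reading the pointwise bound can genuinely fail, since two prefixes sharing a $\phi$-value with probabilities $\epsilon$ and $1-\epsilon$ and deterministic targets $a\neq b$ give $p_{\phi,\tau}=(\epsilon,1-\epsilon)$ and KL $\log(1/\epsilon)>\log 2$ at the rare prefix, so your expected-KL version is the rigorous form of the statement.
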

\begin{proof}
    Because $p_{\phi,\tau}$ is the best predictor under the feature restriction $\phi$ and the class restriction $X_t\in\mathcal{V}_{\tau}$, it can always ignore $\phi(x_{<t})$ and emulate the best class-only predictor $p_{\mathrm{class},\tau}$.  Therefore,
    \begin{align*}
        D_{\mathrm{KL}}(p^\star_{\tau}\|p_{\phi,\tau})
        &\le
        D_{\mathrm{KL}}(p^\star_{\tau}\|p_{\mathrm{class},\tau}) \\
        &\le
        D_{\mathrm{KL}}(p^\star_{\tau}\|U_{\tau}) \\
        &= \log |\mathcal{V}_{\tau}| - H(p^\star_{\tau})
        \le \log |\mathcal{V}_{\tau}|. \qedhere
    \end{align*}
\end{proof}

For small closed classes, knowing the local class leaves only a limited amount of possible extra information.  Thus, the additional benefit of information beyond the local class is bounded even when that information is useful.  For open classes, the bound is much looser: knowing that the next token is a noun, identifier, string, or text node can still leave substantial uncertainty about which token should fill the slot.  This prediction is consistent with our empirical finding that many function-like categories have smaller and more heterogeneous hybrid gains, while larger open-class categories leave more room for architectural differences.

\textbf{Finally: when recall, closure, and local class are not enough, the missing feature is semantic state.}
We state this final case as a hypothesis.  For non-copy open-class choices, identifiers, attribute values, and controlled entity/role probes, the local class can specify the kind of slot being filled, while reuse and closure features still need not determine the filler.  We hypothesize that the semantic state of the discourse provides useful predictive cues in these contexts.

More concretely, let $\delta_{[t]-1}$ be a \emph{discourse state} representing the longer-lived discourse, program, or document state after the previous sentence/unit, where $[t]$ is the index of the sentence containing token $t$ or the analogous local unit in code/markup.
In the context of code evaluation, the discourse state can be identified with the current variable assignments after a step of execution.
In linguistics, different theories of dynamic semantics provide different ways of instantiating $\delta_j$ to encode entities, events, relations, salience, variable bindings, scopes, or document regions \citep{heim1982semantics,heim1983file,kamp1993discourse,grosz1986attention,grosz1995centering}.
At a high level of abstraction, we imagine that, for a sequence of sentences $\mathrm{sent}_1,\ldots,\mathrm{sent}_j$, this discourse state is some structured object updated recurrently by each sentence:
\[
    \delta_j = \mathsf{Update}(\delta_{j-1},\mathrm{sent}_j) .
\]
Thus, each sentence acts as an update operator over the discourse state $\delta_j$.
While models may not explicitly store the discourse state as a structured object, we hypothesize it is useful for them to represent some potentially lossy part of it, since $\delta_j$ carries information useful for predicting the next token that is not supplied by the local class, repeated-prefix continuation, or local structural obligation alone.
For example, as shown in \Cref{fig:discourse-a5-bridge}, in ``John took the book and gave it to Mary. Mary grabbed her \rule{4.5mm}{.35pt},'' the local class suggests a noun phrase, and neither copy nor delimiter state resolves the filler.  What makes \emph{reading glasses} plausible is semantic state: Mary now has the book, so a reading-related continuation becomes more likely.  This is the sense in which we use ``state-conditioned'' or ``semantically conditioned'': the local slot is partly specified, but the filler depends on accumulated context.
Thus, we hypothesize that discourse state tracking can provide useful additional information for next-token prediction in open-class contexts, and thus that models that can better represent discourse state updates may achieve lower loss in these contexts.

Putting the three parts together, repeated continuations and closing delimiters are mostly explained by features that both architectures can access, while small closed classes leave limited room for any richer feature to help.  The largest hybrid gains should therefore appear on open-class, non-copy, state-conditioned targets whose fillers depend on accumulated semantic, program, or document context.  This is the split seen in \Cref{fig:tag_raw_vs_effect,fig:bracket_copy_gap,fig:distance_row}.\footnote{One caveat is that our tags and copy features are noisy proxies for these latent components, so individual tokens may mix several regimes.  The argument should therefore be read as a token-family-level explanation of the observed directions.}

\subsection{Discourse State Tracking Targets to Guide RNN Architectures}
\label{sec:discourse-benchmarks}

Much recent work on recurrent sequence layers uses the $A_5$ word problem as a standard benchmark for state tracking \citep{merrill2024illusion,grazzi2025unlocking,peng2025rwkv,yang2026path,merrill2026olmohybrid}. In this task, the input specifies a sequence of permutations from the alternating group $A_5$; the model must compose these updates in order and then read out the resulting state, or a property of it. In the notation above, $A_5$ is a closed-world case where $\delta$ is a fixed-size finite state and the target is a deterministic readout from that state. The right side of \Cref{fig:discourse-a5-bridge} gives a schematic example of this kind of closed-world permutation-update problem.

$A_5$ has been useful because it cleanly isolates ordered update and readout over a fixed finite state space. However, structured linear-RNN variants can already solve this benchmark essentially perfectly \citep{terzic2026structured}, which makes it less informative for distinguishing future recurrent sequence layers.

Moreover, $A_5$ does not capture certain properties of state tracking in discourse or code. Unlike $A_5$, natural text and code can introduce new entities, events, variables, scopes, and situations over time, so the effective state can grow with input length. The state is also relational: useful readouts may depend on who did what to whom, which object is salient, which scope is active, or which variable binding currently holds \citep{kamp1993discourse}. Thus, a broad takeaway from our analysis is that more realistic state tracking could be a good synthetic-task target for future architectures. Such tasks would preserve the clean update/readout structure of $A_5$ while adding dynamic entity introduction and relational state, making them a more demanding testbed for future linear-RNN and hybrid sequence layers.

\definecolor{pm}{HTML}{1AA3D4}
\definecolor{pmsoft}{HTML}{F4FAFD}      
\definecolor{pmedge}{HTML}{B5DCEB}      
\definecolor{et}{HTML}{A67868}
\definecolor{etsoft}{HTML}{FBF6F2}      
\definecolor{etedge}{HTML}{DCC1AC}      
\definecolor{ink}{HTML}{2A2A2A}
\definecolor{muted}{HTML}{6F6F6F}
\definecolor{bridgegrey}{HTML}{8A8A8A}  

\tikzset{
  >=Stealth,
  panelP/.style={
    rounded corners=8pt,
    line width=.85pt,
    fill=pmsoft,
    draw=pmedge
  },
  panelE/.style={
    rounded corners=8pt,
    line width=.85pt,
    fill=etsoft,
    draw=etedge
  },
  ptitle/.style={
    font=\bfseries\small,
    text=pm,
    anchor=north west,
    inner xsep=0pt,
    inner ysep=0pt
  },
  etitle/.style={
    font=\bfseries\small,
    text=et,
    anchor=north west,
    inner xsep=0pt,
    inner ysep=0pt
  },
  sent/.style={
    font=\footnotesize,
    text=ink,
    align=left,
    anchor=north west,
    inner xsep=0pt,
    inner ysep=0pt
  },
  cloze/.style={
    font=\scriptsize\itshape,
    text=pm,
    anchor=north east,
    inner xsep=0pt,
    inner ysep=0pt
  },
  pnode/.style={
    draw=pm!75,
    fill=white,
    rounded corners=2.5pt,
    line width=.7pt,
    inner xsep=4pt,
    inner ysep=2.5pt,
    font=\footnotesize,
    text=ink,
    align=center,
    minimum height=5mm,
    minimum width=12mm
  },
  goalnode/.style={
    draw=pm,
    fill=pm!10,
    rounded corners=2.5pt,
    line width=.95pt,
    inner xsep=4pt,
    inner ysep=2.5pt,
    font=\footnotesize\itshape,
    text=pm!75!black,
    align=center,
    minimum height=5mm,
    minimum width=12mm
  },
  rarrow/.style={
    draw=muted,
    line width=.65pt,
    -{Stealth[length=1.6mm,width=1.3mm]}
  },
  relarrow/.style={
    draw=pm,
    line width=.9pt,
    dash pattern=on 1.9pt off 1.5pt,
    -{Stealth[length=1.8mm,width=1.5mm]}
  },
  elabel/.style={
    font=\scriptsize\itshape,
    text=muted,
    fill=pmsoft,
    fill opacity=.96,
    text opacity=1,
    inner xsep=1.5pt,
    inner ysep=.4pt
  },
  rellabel/.style={
    font=\scriptsize\itshape,
    text=pm,
    fill=pmsoft,
    fill opacity=.96,
    text opacity=1,
    inner xsep=1.5pt,
    inner ysep=.4pt
  },
  codebox/.style={
    draw=etedge,
    fill=white,
    rounded corners=5pt,
    line width=.7pt,
    inner xsep=7pt,
    inner ysep=6pt,
    font=\ttfamily\footnotesize,
    text=ink,
    align=left
  },
  bridge/.style={
    draw=bridgegrey,
    fill=white,
    rounded corners=3pt,
    line width=.75pt,
    inner xsep=3.5pt,
    inner ysep=2.5pt,
    font=\scriptsize\itshape,
    text=bridgegrey,
    align=center
  },
  bridgearr/.style={
    draw=bridgegrey,
    line width=.8pt,
    -{Stealth[length=1.7mm,width=1.4mm]}
  }
}

\begin{figure*}[t]
\centering
\resizebox{.92\textwidth}{!}{%
\begin{tikzpicture}[font=\small]

\def\PW{8.4cm}
\def\AW{5.4cm}
\def\PH{3.7cm}
\def\Gap{1.2cm}

\node[panelP, minimum width=\PW, minimum height=\PH, anchor=north west] (D) at (0,0) {};
\node[panelE, minimum width=\AW, minimum height=\PH, anchor=north west] (A)
  at ([xshift=\Gap]D.north east) {};

\node[ptitle] (dt)
  at ([xshift=6mm, yshift=-2.5mm]D.north west)
  {Discourse tracking};

\node[sent] (s1)
  at ([yshift=-2mm]dt.south west)
  {John took the book and gave it to Mary.};

\node[sent] (s2)
  at ([yshift=-1mm]s1.south west)
  {Mary grabbed her \rule{4.5mm}{.35pt}};

\node[cloze]
  at ([xshift=-5mm, yshift=-0.5mm]s2.south -| D.east)
  {predict content phrase};

\node[pnode] (john)
  at ([xshift=13mm, yshift=15mm]D.south west)
  {John};

\node[pnode, right=9mm of john] (book)
  {book};

\node[pnode, right=9mm of book] (mary)
  {Mary};

\node[goalnode] (glasses)
  at ([xshift=65mm, yshift=5.5mm]D.south west)
  {reading\\glasses};

\draw[rarrow, shorten <=1pt, shorten >=1pt]
  (john.east) --
  node[elabel, above=.35mm] {gives}
  (book.west);

\draw[rarrow, shorten <=1pt, shorten >=1pt]
  (book.east) --
  node[elabel, above=.35mm] {to}
  (mary.west);

\draw[rarrow, shorten <=1pt, shorten >=1pt]
  (mary.north west)
  .. controls +(-7mm,6mm) and +(7mm,6mm) ..
  node[elabel, above, pos=.52] {has}
  (book.north east);

\draw[relarrow, shorten <=1pt, shorten >=1pt]
  (book.south)
  .. controls +(0mm,-10mm) and +(-23mm,-1mm) ..
  node[rellabel, below, pos=.48] {makes likely}
  (glasses.west);

\draw[rarrow, shorten <=1pt, shorten >=1pt]
  (mary.south east)
  to[out=-35,in=125]
  node[elabel, above right, pos=.55] {grabs}
  (glasses.north);

\node[etitle] (atit)
  at ([xshift=6mm, yshift=-2.5mm]A.north west)
  {$A_5$-style state tracking};

\node[codebox, anchor=center]
  at ($(A.center)+(0,-3.5mm)$) (code)
  {\begin{tabular}{@{}l@{\,\,}l@{\,\,}l@{}}
   \multicolumn{3}{@{}l@{}}{x = [a, b, c, d, e]}\\[2pt]
   \multicolumn{3}{@{}l@{}}{\textcolor{muted}{\# permutation (1\,2\,3)}}\\
   tmp  & = & x[0]\\
   x[0] & = & x[2]\\
   x[2] & = & x[1]\\
   x[1] & = & tmp\\[2pt]
   \multicolumn{3}{@{}l@{}}{\textcolor{et!75!black}{assert x[2] == b}}
   \end{tabular}};

\node[bridge] (br)
  at ($(D.east)!0.5!(A.west)$)
  {special\\case};

\draw[bridgearr]
  (D.east) -- ([xshift=-1.2mm]br.west);

\draw[bridgearr]
  ([xshift=1.2mm]br.east) -- (A.west);

\end{tikzpicture}%
}
\caption{\textbf{Discourse tracking as richer state tracking.}
\textbf{Left:} predicting the masked content phrase requires tracking that the book has ended up with Mary and using that discourse state to make reading-related objects likely.
\textbf{Right:} closed-world $A_5$ permutation updates apply ordered, deterministic state changes over a fixed set of slots, followed by a deterministic readout such as an assertion about one slot.}
\vspace{-10pt}
\label{fig:discourse-a5-bridge}
\end{figure*}

\section{Application: Higher-Signal Evaluations for Pretraining Hybrid Models}
\label{sec:higher_signal_evals}

The decomposition in \S\ref{sec:empirical_results} suggests that aggregate next-token loss mixes several distinct computation regimes that respond differently to the choice of sequence mixer.
Because most tokens are easy for any reasonable architecture, aggregate loss is a relatively \emph{low-signal} target for architecture search: real but systematic gaps can be diluted by the easy majority.
We propose tracking \emph{filtered} losses, computed from the same next-token NLL, as a higher-signal complement to aggregate validation.
The filters serve two roles. First, they can amplify small architecture gaps by concentrating on the tokens where a capability matters. Second, they can reveal different empirical behaviors across token families that a single aggregate number hides.
As a proof of concept, we show that \emph{filtered} validation losses, computed from the same per-token NLL as standard validation, can reveal capability differences between architectures that are nearly invisible in aggregate loss.

\textbf{Filters.} We evaluate three filters built from the token-level features of \S\ref{sec:empirical_method}:
(i) \textsc{All tokens}, the standard aggregate;
(ii) \textsc{Top-10}\,$\cap$\,\textsc{No-Copy}, restricted to the ten most hybrid-favored open-class POS families (\Cref{fig:tag_raw_vs_effect}) and excluding positions that complete a repeated $n$-gram for $n\le4$;
(iii) \textsc{Copy-5\,only}, positions that complete a repeated 5-gram.
By construction, (ii) targets \emph{state-conditioned readout} while removing visible-prefix retrieval, and (iii) isolates retrieval.

\textbf{Setup. }
To check that filtered evals carry useful architectural signal during small-scale pretraining, we evaluate checkpoints from three 1B-parameter development training runs released by \citet{merrill2026olmohybrid}: a \textbf{Transformer}, a \textbf{Hybrid} (interleaved GDN/attention, $3{:}1$ ratio), and a \textbf{Pure RNN} (GDN, no attention). The three are trained on a matched data mixture and budget under WSD scheduling; we score every annealed checkpoint and report token-loss curves in \Cref{fig:filtered_eval_1b}.

\textbf{State-oriented filters amplify small architecture gaps. }
Under aggregate \textsc{All tokens} loss (\Cref{fig:filtered_eval_1b}, left), the Transformer--Hybrid separation is small, with a maximum of roughly $0.06$ nats and smaller gaps later in training.
Under \textsc{Top-10}\,$\cap$\,\textsc{No-Copy} (middle), the maximum Transformer--Hybrid separation is roughly $0.12$ nats, about twice as large, and the ordering becomes \textsc{Hybrid}\,$<$\,\textsc{Pure RNN}\,$<$\,\textsc{Transformer}.
This supports the intended use of the filter: removing copy positions and restricting to open-class targets leaves a regime that is more sensitive to how well an architecture supports latent-state construction and state-conditioned readout.

\textbf{Copy filters expose what aggregate loss averages away.}
In the aggregate \textsc{All Tokens} curve, the Transformer and Pure RNN appear roughly matched, especially at later checkpoints. Taken alone, this comparison would suggest that the two architectures have similar validation behavior. The filtered curves show that this apparent match is an average over different regimes. The \textsc{Top-10} $\cap$ \textsc{No-Copy} filter separates the models on state-oriented non-copy targets, where the Pure RNN does better than the Transformer. The \textsc{Copy-5,only} filter reveals the compensating weakness: on positions that continue a repeated 5-gram, where prediction mainly requires retrieving material from the visible prefix, the Pure RNN is consistently about $0.10$--$0.20$ nats worse than the two attention-based models. Thus filtered loss makes visible a split that aggregate loss largely obscures: recurrence-only models can look competitive on average while still lagging on visible-prefix retrieval.

\textbf{A diagnostic for hybrid architecture design.}
Aggregate loss reports whether an architecture is competitive overall; filtered evals diagnose \emph{why}.
A Pure RNN that closes the \textsc{Top-10}\,$\cap$\,\textsc{No-Copy} gap but stays far behind attention-based models on \textsc{Copy-5\,only} has improved its state representation but not the retrieval problem; an architecture that does the reverse is borrowing attention's strength without gaining on state-tracking.
This decomposition is especially useful when iterating on \emph{hybrid} architectures, where the design space spans the layer ratio, the choice of recurrent mixer, and the placement of attention: filtered evals make it possible to ask which of those knobs is buying which capability, rather than reading a single aggregate number that aggregates over both regimes.
Because both filters are computed from the same per-token NLL as standard validation, they add negligible cost, and we suggest reporting them alongside aggregate loss during architecture sweeps.

\begin{figure}[t]
\centering
\includegraphics[width=0.9\linewidth]{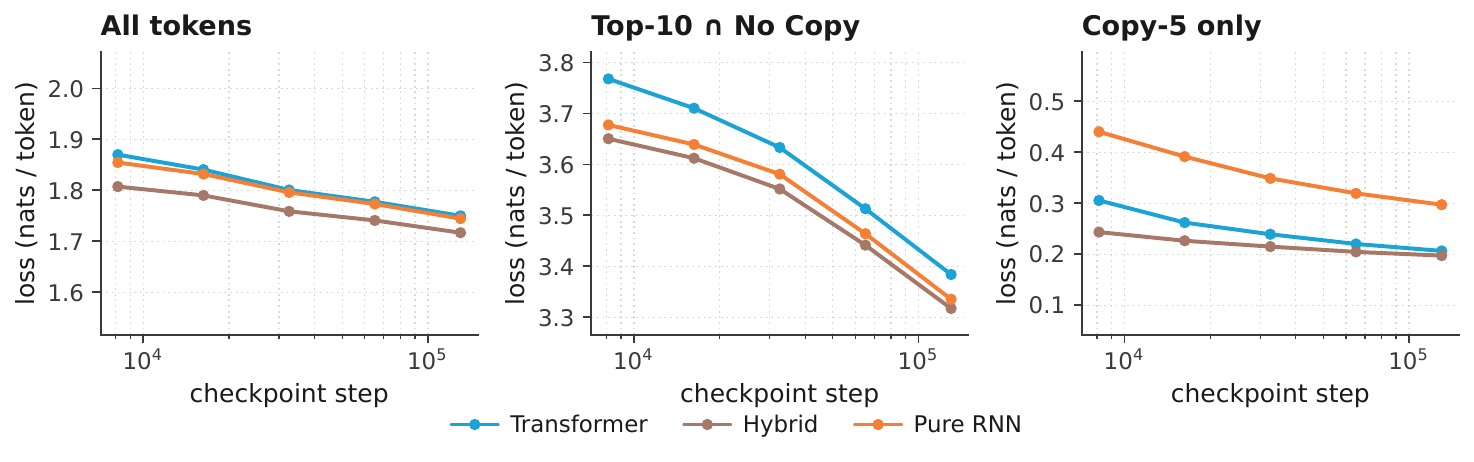}
\caption{\textbf{Filtered token losses surface architecture differences during 1B
pretraining.} Token-loss curves at WSD-annealed checkpoints for a Transformer, a
Hybrid, and a Pure RNN.
}
\label{fig:filtered_eval_1b}
\end{figure}

\section{Conclusion}
We compared a matched pair of models---a transformer model (Olmo 3) and a hybrid model (Olmo Hybrid)---at the token level to identify which next-token predictions drive the average loss gap.
Three patterns recur: (i) the hybrid's edge concentrates on open-class content predictions; (ii) opening delimiters are hybrid-favored while closing delimiters are transformer-favored; and (iii) the hybrid advantage vanishes on visible-prefix copying.
Overall, these findings seem to align with the expressivity benefits of hybrid models: attention provides direct visible-prefix retrieval, while recurrent layers support ordered state update and state-conditioned readout.
It therefore makes sense that the hybrid advantage concentrates on predictions whose answer depends on an evolving discourse, program, or document state, rather than on copying or bracket closure.

Two directions follow.
First, $A_5$-style benchmarks capture only a closed-world special case of the state tracking natural language requires; formal-language tasks admitting dynamic entity introduction and relational structure, evaluated on linear-RNN variants (\S\ref{sec:discourse-benchmarks}), would test whether architectures that better realize discourse-style state tracking also improve on the token families driving the hybrid gap.
Second, the same token-level profile can diagnose data-mixture and post-training choices: underperformance on a narrow family (e.g., numeric literals, closing delimiters) signals where targeted data might help, while the rest of the profile checks that gains do not erode state-dependent predictions.
Our filtered-evaluation results provide an encouraging proof of concept for this diagnostic use: token-level decompositions can sharpen pretraining comparisons, expose capability tradeoffs that aggregate loss hides, and identify concrete token families for targeted intervention.
Future work can build on this proof of concept by exploring the practical implications of token-level profiles for architecture search, data selection, and post-training.

\begin{ack}
The authors thank Allyson Ettinger and Kyle Richardson for relevant feedback.
\end{ack}

\bibliographystyle{plainnat}
\bibliography{references}


\newpage
\appendix

\section{Empirical methodology details}
\label{app:method_details}

\subsection{Domains, packing, and pairing protocol}
\label{app:domains_details}

We evaluate both models on the same evaluation sequences and compute NLL at every position. Text is packed into contiguous sequences of length $T=8192$. Allcomparisons are \emph{paired} at the level of a single next-token decision: \emph{same checkpoint, same prefix/context, same target token}. For each position $i$ we compute $\Delta_i=\ell^{\mathrm{T}}_i-\ell^{\mathrm{H}}_i$.

For the main text we focus on prose (PG-19, CC-News, Wikipedia, ArXiv) and three structured domains (Python, HTML, \LaTeX{}). The pooled prose regression uses a slightly expanded prose suite at the final matched checkpoint (including additional textbooks and scientific papers), but uses the same packing and pairing protocol.

\subsection{Token tagging and alignment}
\label{app:tagging_details}

\paragraph{Prose POS tags and content/function mapping.}
We POS-tag prose at the \emph{word} level using the Brown tagset
\citep{francis1979brown,bird2009nltk}, then map each Brown tag to a coarse family.
For the aggregate prose analysis, we define:

\begin{itemize}[leftmargin=1.2em,itemsep=0.2em,topsep=0.2em]
  \item \textbf{Content words:}
  \textsc{Noun}, \textsc{Verb}, \textsc{Adjective}, \textsc{Adverb},
  \textsc{Interjection}, \textsc{Qualifier}.
  \item \textbf{Function words:}
  \textsc{Existential}, \textsc{Pronoun}, \textsc{Det/Article},
  \textsc{Preposition}, \textsc{Conjunction}, \textsc{Aux BE},
  \textsc{Aux HAVE}, \textsc{Aux DO}, \textsc{Modal}, \textsc{TO},
  \textsc{Wh-word}.
  \item \textbf{Other:} punctuation, brackets, numerals, particles, and residual
  tags.
\end{itemize}

\paragraph{Coarse Brown-tag mapping.}
Table~\ref{tab:coarse_mapping} gives the exact mapping used to produce the coarse
families in the main-text prose figures.

\begin{table}[H]
  \centering
  \caption{Mapping from Brown tags to the coarse categories used in the prose analyses.}
  \label{tab:coarse_mapping}
  \small
  \begin{tabular}{lp{9.5cm}}
    \toprule
    \textbf{Coarse category} & \textbf{Brown tags} \\
    \midrule
    \multicolumn{2}{l}{\emph{Content words}} \\
    Noun        & \texttt{NN}, \texttt{NN\$}, \texttt{NNS}, \texttt{NNS\$}, \texttt{NP}, \texttt{NP\$}, \texttt{NPS}, \texttt{NPS\$}, \texttt{NR}, \texttt{NR\$}, \texttt{NRS} \\
    Verb        & \texttt{VB}, \texttt{VBD}, \texttt{VBG}, \texttt{VBN}, \texttt{VBZ} \\
    Adjective   & \texttt{JJ}, \texttt{JJ\$}, \texttt{JJR}, \texttt{JJS}, \texttt{JJT} \\
    Adverb      & \texttt{RB}, \texttt{RB\$}, \texttt{RBR}, \texttt{RBT}, \texttt{RN} \\
    Interjection & \texttt{UH} \\
    Qualifier   & \texttt{QL}, \texttt{QLP} \\
    \midrule
    \multicolumn{2}{l}{\emph{Function words}} \\
    Pronoun     & \texttt{PP\$}, \texttt{PP\$\$}, \texttt{PPL}, \texttt{PPLS}, \texttt{PPO}, \texttt{PPS}, \texttt{PPSS}, \texttt{PN}, \texttt{PN\$} \\
    Det/Article & \texttt{AT}, \texttt{DT}, \texttt{DT\$}, \texttt{DTI}, \texttt{DTS}, \texttt{DTX}, \texttt{ABL}, \texttt{ABN}, \texttt{ABX}, \texttt{AP}, \texttt{AP\$} \\
    Preposition & \texttt{IN} \\
    Conjunction & \texttt{CC}, \texttt{CS} \\
    Aux BE      & \texttt{BE}, \texttt{BED}, \texttt{BEDZ}, \texttt{BEG}, \texttt{BEM}, \texttt{BEN}, \texttt{BER}, \texttt{BEZ} \\
    Aux HAVE    & \texttt{HV}, \texttt{HVD}, \texttt{HVG}, \texttt{HVN}, \texttt{HVZ} \\
    Aux DO      & \texttt{DO}, \texttt{DOD}, \texttt{DOZ} \\
    Modal       & \texttt{MD} \\
    TO          & \texttt{TO} \\
    Existential & \texttt{EX} \\
    \midrule
    \multicolumn{2}{l}{\emph{Other categories}} \\
    Wh-word     & \texttt{WDT}, \texttt{WP\$}, \texttt{WPO}, \texttt{WPS}, \texttt{WQL}, \texttt{WRB} \\
    Numeral     & \texttt{CD}, \texttt{CD\$}, \texttt{OD} \\
    Particle    & \texttt{RP} \\
    Punctuation & \texttt{,}, \texttt{.}, \texttt{:}, \texttt{'}, \texttt{''}, \texttt{``}, \texttt{--}, \texttt{*} \\
    Open Bracket & \texttt{(} \\
    Close Bracket & \texttt{)} \\
    \bottomrule
  \end{tabular}
\end{table}

\paragraph{Projecting word tags onto subword tokens.}
Because the LM uses subword tokenization, we assign each word-level tag to every
LM token whose decoded character span overlaps the tagged word. Each LM token
also receives a word-position indicator \textsc{Whole}/\textsc{Prefix}/%
\textsc{Middle}/\textsc{Suffix} to separate category effects from word-onset
effects.

\paragraph{Structured domains and multi-tag overlap.}
For Python we tokenize files using the standard \texttt{tokenize} module. For
HTML and \LaTeX{} we use lightweight parser/regex pipelines that separate
text-like content from structural tokens (tags/delimiters/commands/whitespace).
We align source-level tags to LM subword tokens by character-span overlap.

A single LM token can overlap \emph{multiple} source tokens (e.g., \texttt{):}
spans both a close-paren and a colon). We therefore use \textbf{multi-tag
attribution}: the LM-token loss contributes to every overlapping source tag
rather than forcing a single primary label.




\section{Structured-domain tagsets and coarse breakdowns}
\label{app:structured_domains}

This section supplies the structured-domain tag taxonomies.

\subsection{Python}
\label{sec:python_analysis}

\paragraph{Tagging and multi-tag alignment.}
We tokenize source files with Python's \texttt{tokenize} module and map the
resulting source-level tags onto LM subword tokens by character-span overlap.
Because one LM token can contain several Python tokens (e.g., \texttt{):}), the
Python analysis uses multi-tag attribution.

\begin{table}[H]
  \centering
  \footnotesize
  \caption{Python token taxonomy.}
  \label{tab:python_tags}
  \begin{tabularx}{\linewidth}{>{\raggedright\arraybackslash}p{1.45cm}>{\raggedright\arraybackslash}p{3.8cm}X}
    \toprule
    Coarse & Fine-grained tags & Examples \\
    \midrule
    Identifier & \texttt{identifier}, \texttt{upper\_camel}, \texttt{private}, \texttt{dunder} & \texttt{x}, \texttt{MyClass}, \texttt{\_val}, \texttt{\_\_init\_\_} \\
    Keyword    & \texttt{keyword}, \texttt{soft\_keyword} & \texttt{def}, \texttt{return}, \texttt{match} \\
    Builtin    & \texttt{builtin} & \texttt{print}, \texttt{len}, \texttt{ValueError} \\
    Operator   & \texttt{arithmetic}, \texttt{assignment}, \texttt{comparison}, \texttt{bitwise} & \texttt{+}, \texttt{+=}, \texttt{==}, \texttt{\&} \\
    Delimiter  & \texttt{enclosure}, \texttt{delimiter}, \texttt{other} & \texttt{(}, \texttt{)}, \texttt{,}, \texttt{:}, \texttt{.} \\
    String     & \texttt{string}, \texttt{triple\_quoted}, \texttt{fstring}, \texttt{raw}, \texttt{bytes} & \texttt{'hello'}, \texttt{"""doc"""}, \texttt{f'\{x\}'} \\
    Number     & \texttt{integer}, \texttt{float}, \texttt{hex}, \texttt{octal}, \texttt{complex} & \texttt{42}, \texttt{3.14}, \texttt{0xFF} \\
    Comment    & \texttt{COMMENT} & \texttt{\# ...} \\
    Structure  & \texttt{NEWLINE}, \texttt{INDENT}, \texttt{WHITESPACE} & indentation and layout \\
    \bottomrule
  \end{tabularx}
\end{table}

\FloatBarrier

\subsection{HTML} \label{sec:html_analysis}

We tag HTML with a parser-plus-regex pipeline that distinguishes opening/closing
markup, attributes, text nodes, comments, punctuation glyphs, and whitespace.

\begin{table}[H]
  \centering
  \footnotesize
  \caption{HTML token taxonomy.}
  \label{tab:html_tags}
  \begin{tabularx}{\linewidth}{>{\raggedright\arraybackslash}p{1.35cm}>{\raggedright\arraybackslash}p{3.7cm}X}
    \toprule
    Coarse & Fine-grained tags & Examples \\
    \midrule
    Tag         & \texttt{open}, \texttt{close}, \texttt{self\_close}, \texttt{void}, \texttt{doctype} & \texttt{<div>}, \texttt{</div>}, \texttt{<br/>}, \texttt{<img>} \\
    Attribute   & \texttt{name}, \texttt{equals}, \texttt{value\_quoted}, \texttt{value\_unquoted} & \texttt{class}, \texttt{=}, \texttt{"main"} \\
    Text        & \texttt{content}, \texttt{entity} & free-form page text \\
    Comment     & \texttt{comment} & \texttt{<!-- ... -->} \\
    Punctuation & \texttt{angle\_open}, \texttt{angle\_close}, \texttt{slash}, \texttt{quote} & \texttt{<}, \texttt{>}, \texttt{/}, \texttt{"} \\
    Whitespace  & \texttt{ws} & spaces and indentation \\
    \bottomrule
  \end{tabularx}
\end{table}

\FloatBarrier

\subsection{\LaTeX{}}

We tag \LaTeX{} with a lightweight regex pipeline that separates commands,
environments, math, tables, grouping symbols, text, comments, and newlines.

\begin{table}[H]
  \centering
  \footnotesize
  \caption{\LaTeX{} token taxonomy.}
  \label{tab:latex_tags}
  \begin{tabularx}{\linewidth}{>{\raggedright\arraybackslash}p{1.45cm}>{\raggedright\arraybackslash}p{3.9cm}X}
    \toprule
    Coarse & Fine-grained tags & Examples \\
    \midrule
    Command     & \texttt{control}, \texttt{section}, \texttt{ref}, \texttt{cite}, \texttt{formatting}, etc. & \texttt{\textbackslash section}, \texttt{\textbackslash textbf} \\
    Environment & \texttt{begin/end}, \texttt{begin/end\_math}, \texttt{begin/end\_table} & \texttt{\textbackslash begin\{table\}} \\
    Math        & \texttt{inline/display\_open/close}, \texttt{env\_name} & \texttt{\$}, \texttt{\$\$} \\
    Table       & \texttt{ampersand}, \texttt{hline}, \texttt{rule}, \texttt{multicolumn}, \texttt{linebreak} & \texttt{\&}, \texttt{\textbackslash\textbackslash} \\
    Group       & \texttt{brace/bracket\_open/close} & \texttt{\{}, \texttt{\}} \\
    Text        & \texttt{content}, \texttt{space} & ordinary words and spaces \\
    Comment     & \texttt{comment} & \texttt{\% ...} \\
    Special     & \texttt{escaped}, \texttt{tilde}, \texttt{caret}, \texttt{underscore} & escaped symbols \\
    Newline     & \texttt{newline}, \texttt{blank\_line} & line-level layout \\
    \bottomrule
  \end{tabularx}
\end{table}

\FloatBarrier

\section{Controlled synthetic probes: templates and scoring}
\label{app:controlled_details}

The main text reports results for three controlled probe families (pronoun
memory, entity tracking, structural closure). This section records the minimal
information needed to reproduce the probes.

\paragraph{Distances and sampling.}
We evaluate antecedent distances $d \in \{32, 64, 128, 256, 512, 1024\}$. For each
family and distance we generate a fixed number of examples with a shared filler
distribution across models.

\paragraph{Scoring.}
Pronoun memory and entity tracking are scored \emph{contrastively} at the target
position using (i) accuracy $\mathbf{1}[m>0]$ and (ii) a log-probability margin
$m = \log p(y^+ \mid c) - \log p(y^- \mid c)$, where $y^+$ is the correct target
and $y^-$ is a matched distractor under the same prefix $c$. Structural closure
is scored by the raw NLL on the closing token, $\ell_{\mathrm{close}} = -\log
p(y_{\mathrm{close}}\mid c)$.

\paragraph{Prompt templates.}
We use simple, highly regular templates so that the only manipulated factor is
antecedent distance.

\begin{itemize}[leftmargin=1.2em,itemsep=0.2em,topsep=0.2em]
  \item \textbf{Pronoun memory:} introduce two entities with different genders and
  roles; later, reference the role and force a pronoun choice (e.g., \tok{he}
  vs.\ \tok{she}).
  \item \textbf{Entity tracking:} bind two same-gender entities to distinct
  attributes; later, query the attribute and force a name choice.
  \item \textbf{Structural closure:} open a structure (e.g., an HTML tag or
  bracketed span), insert filler, and score the closing token.
\end{itemize}

\FloatBarrier

\section{Additional regression control diagnostics}
\label{app:control_coefficients}

The main text focuses on the dedicated repeated-$n$-gram analysis in
Figure~\ref{fig:bracket_copy_gap}. Here we report the remaining control-feature
diagnostics from the original coarse-tag regression. Figure~\ref{fig:appendix_control_coefficients}
separates two quantities: raw paired gaps for concrete repeated-token subsets,
and fitted coefficients from the frequency-adjusted regression in
Equation~\ref{eq:prose_reg}. The raw gaps ask whether the hybrid has lower NLL
on a subset in absolute terms; the coefficients ask how each feature shifts the
paired hybrid--transformer gap after controlling for domain, tag, word position,
sequence position, difficulty, previous-token distance, and target-token
frequency.

The sign convention is the same as in the main text. Positive raw gaps mean
lower hybrid NLL, while positive coefficients increase the adjusted hybrid
advantage. The repeated-token coefficients are negative or near zero, consistent
with the conclusion that visible-prefix reuse reduces the hybrid advantage once
comparable tokens are matched by the regression. The remaining controls are
included as adjustment variables rather than central findings.

\begin{figure}[t]
\centering
\includegraphics[width=0.86\linewidth]{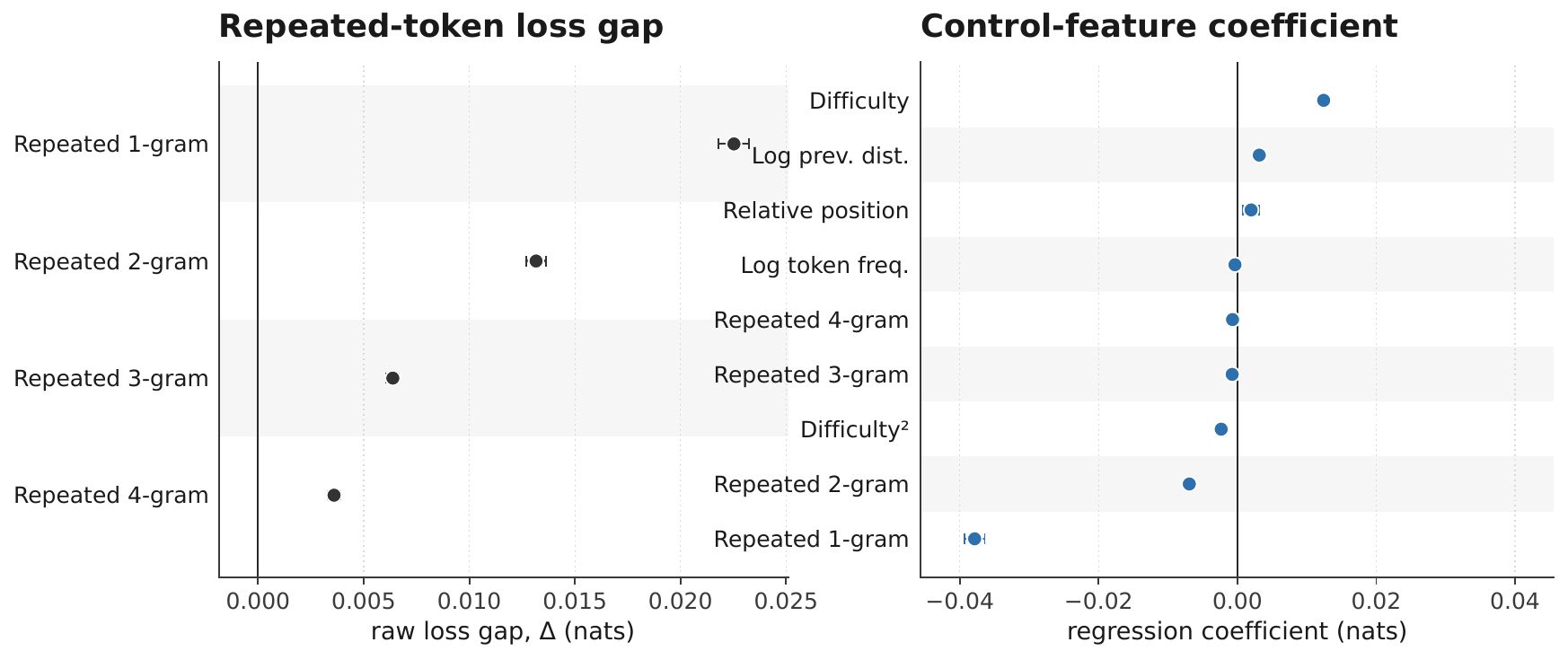}
\caption{\textbf{Full control-feature coefficient diagnostic.}
Left: raw paired loss gaps for repeated 1--4-gram events. Right: frequency-adjusted
coefficients from the coarse-tag regression, including difficulty, position,
previous-token distance, target-token frequency, and repeated-token controls.
Raw gaps show absolute hybrid advantage on repeated-token subsets, whereas
coefficients show adjusted shifts in the paired hybrid--transformer gap after
the full regression specification.}
\label{fig:appendix_control_coefficients}
\end{figure}

\FloatBarrier

\section{Limitations}
\label{app:limitations}

Our analysis has several limitations.
First, our coarse POS, bracket, and copy tags are noisy proxies for the
underlying computation a token requires. A noun token can be a
state-conditioned readout in one context and a near-copy in another; regression
controls reduce but do not eliminate this confounding.
Second, the tag-stratified and regression analyses are correlational. The
synthetic probes provide controlled manipulation, but each isolates a narrow
construction and does not directly measure the contribution of individual
recurrent layers.
Third, our interpretation in \S\ref{sec:discussion} is a description of where the
gap concentrates, not a mechanistic claim. We do not localize the hybrid
advantage to specific layers or heads, and dynamic-semantics formalisms are used
as a high-level lens rather than as a probed representation.
Fourth, all evaluations are in English prose, Python, HTML, and \LaTeX{}; we do
not test other natural languages or programming languages.

\section{Compute resources}
\label{app:compute}

All token-level analyses use forward passes only (no training) on the released
Olmo~3~7B and Olmo Hybrid~7B checkpoints. Scoring the full evaluation suite
(packed length-8192 sequences across the four prose and three structured
domains) requires approximately 100 GPU-hours on a single 8xH100 node. The 1B
filtered-evaluation curves in \S\ref{sec:higher_signal_evals} re-score released
WSD-annealed checkpoints from \citet{merrill2026olmohybrid} and require
approximately 20 additional GPU-hours.

\section{Licenses for existing assets}
\label{app:licenses}

We use the following released artifacts.
\textbf{Models:} Olmo~3~7B and Olmo Hybrid~7B \citep{olmo2025olmo3,
merrill2026olmohybrid} are released under the Apache 2.0 license; the 1B
development checkpoints are released under the same terms.
\textbf{Datasets:} PG-19 (Apache 2.0); CC-News (CC-BY); Wikipedia
(CC BY-SA 3.0/4.0 / GFDL); ArXiv articles (per individual paper licenses;
we use only abstracts/text exposed via the public dump).
\textbf{Tools:} NLTK (Apache 2.0) for the Brown POS tagset; the Python
\texttt{tokenize} module (PSF license). We use these assets in accordance with
their licenses for non-commercial research.

\end{document}